\newtheorem{thm}{Theorem}
\newtheorem{lem}[thm]{Lemma}
\DeclareMathOperator*{\argmin}{arg\,min}
\DeclareMathOperator{\DKL}{D_{KL}}
\DeclareMathOperator{\E}{E}
\newcommand{\Bb}{ {\bf B} }
\newcommand{\C}[1]{\mathcal{#1}}
\newcommand{\eq}[1]{\begin{align}#1\end{align}}
\newcommand{\nn}{\nonumber}
\renewcommand{\footnotemark}{}
\title{Taming the Noise in \\ Reinforcement  Learning via Soft Updates}
\author{ {\bf Roy Fox$^*$\thanks{$^*$These authors contributed equally to this work.}} \\
Hebrew University \\
\And
{\bf Ari Pakman$^*$}  \\
Columbia University \\
\And
{\bf Naftali Tishby}   \\
Hebrew University\\
}
\begin{document}

\maketitle

\begin{abstract}
Model-free reinforcement learning algorithms, such as Q-learning, perform poorly in the early stages of learning in noisy environments, because much effort is spent unlearning biased estimates of the state-action value function. The bias results from selecting, among several noisy estimates, the apparent optimum, which may actually be suboptimal. We propose G-learning, a new off-policy learning algorithm that regularizes the value estimates by penalizing deterministic policies in the beginning of the learning process. We show that this method reduces the bias of the value-function estimation, leading to faster convergence to the optimal value and the optimal policy. Moreover, G-learning enables the natural incorporation of prior domain knowledge, when available. The stochastic nature of G-learning also makes it avoid some exploration costs, a property usually attributed only to on-policy algorithms. We illustrate these ideas in several examples, where G-learning results in significant improvements of the convergence rate and the cost of the learning process.
\end{abstract}

\section{INTRODUCTION}

The need to separate signals from noise stands at the center of any learning task in a noisy environment.
While a rich set of tools to regularize learned parameters has been developed for supervised and unsupervised learning problems,
in areas such as reinforcement learning there still exists a vital need for techniques that tame the noise and avoid overfitting and local minima.

One of the central algorithms in reinforcement learning is Q-learning~\cite{watkins1992q},
a model-free off-policy algorithm, which  attempts to estimate the optimal value function $Q$, the cost-to-go of the optimal policy.
To enable this estimation, a stochastic exploration policy is used by the learning agent to interact with its environment and explore the  model. 
This approach is very successful and popular, and despite several alternative approaches developed in recent years~\cite{sutton1998reinforcement,powell2007approximate, szepesvari2010algorithms}, 
it is still being applied successfully in complex domains for which explicit models are lacking~\cite{mnih2015human}.

However, in noisy domains, in early stages of the learning process, the min (or max) operator in Q-learning 
brings about a bias in the estimates. This problem is akin to the ``winner's curse" in auctions~\cite{capen1971competitive,thaler1988anomalies, van2004rational, smith2006optimizer}.
With too little evidence, the biased estimates may lead to wrong decisions, 
which slow down the convergence of the learning process, and require subsequent unlearning of these suboptimal behaviors.

In this paper we present G-learning, a new off-policy information-theoretic approach to regularizing 
the state-action value function 
learned by an agent interacting with its environment in model-free settings.

This is achieved by adding to  the cost-to-go a  term that penalizes deterministic policies
which diverge from a simple stochastic prior policy~\cite{rubin2012trading}.
With only a small sample to go by, G-learning prefers a more randomized policy, and as samples accumulate, 
it gradually shifts to a more deterministic and exploiting policy.
This transition is managed by appropriately scheduling the coefficient of the  penalty term as learning proceeds.

In Section~\ref{sec:beta} we discuss the theoretical and practical aspects of scheduling this coefficient, and suggest that a simple linear schedule can perform well.
We show that G-learning with this schedule reduces the value estimation bias by avoiding overfitting in its selection of the update policy.
We further establish empirically the link between bias reduction and learning performance, that has been the underlying assumption in many approaches to reinforcement learning~\cite{hasselt2010double,ghavamzadeh2011speedy,lee2012intelligent,deep2q}.
The examples in Section~\ref{sec:examples} demonstrate the significant improvement thus obtained. 

Furthermore, in domains where exploration incurs significantly higher costs than exploitation, such as the classic cliff domain~\cite{sutton1998reinforcement}, 
G-learning  with an $\epsilon$-greedy exploration policy is exploration-aware, and chooses a less costly exploration policy, thus reducing the costs incurred during the learning process. 
Such awareness to the cost of exploration is usually attributed to on-policy algorithms, such as SARSA~\cite{sutton1998reinforcement,szepesvari2010algorithms} and Expected-SARSA~\cite{van2009theoretical,john1994best}.
The remarkable finding that G-learning exhibits on-policy-like properties is illustrated in the example of Section~\ref{sec:cliff}.

In Section~\ref{sec:noisy} we discuss the problem of learning in noisy environments.
In Section~\ref{sec:learn} we introduce the penalty term, derive G-learning and prove its convergence.
In Section~\ref{sec:beta} we determine a schedule for the coefficient of the information penalty term. 
In Section~\ref{sec:related} we discuss related work. 
In Section~\ref{sec:examples} we illustrate the strengths of the algorithm through several examples.

\section{LEARNING IN NOISY ENVIRONMENTS}
\label{sec:noisy}

\subsection{NOTATION AND BACKGROUND}

We consider the usual setting of a Markov Decision Process (MDP), in which an agent interacts with its environment by repeatedly observing its state $s\in S$, taking an action $a\in A$, with $A$ and $S$ finite, and incurring cost $c\in\mathbb R$.
This induces a stochastic process $s_0,a_0,c_0,s_1,\ldots$, where $s_0$ is fixed, and where for $t\ge0$ we have the Markov properties indicated by the conditional distributions
$a_t\sim\pi_t(a_t|s_t)$,
$c_t\sim\theta(c_t|s_t,a_t)$ and
$s_{t+1}\sim p(s_{t+1}|s_t,a_t)$.

The objective of the agent is to find a time-invariant policy~$\pi$ that minimizes the total discounted expected cost
\eq{\label{eq:vdef}
V^\pi(s)=\sum_{t\ge0}\gamma^t\E[c_t|s_0=s],
}
simultaneously for any $s\in S$, for a given discount factor $0\le\gamma<1$.
For each $t$, the expectation above is over all trajectories of length $t$ starting at $s_0=s$.
A related quantity is the state-action value function
\eq{\nn
Q^\pi(s,a)&=\sum_{t\ge0}\gamma^t\E[c_t|s_0=s,a_0=a]
\\
\label{eq:qdef}
&=\E_\theta[c|s,a]+\gamma\E_p[V^\pi(s')|s,a],
}
which equals the total discounted expected cost that follows from choosing action $a$ in state $s$, and then following the policy $\pi$.

If we know the distributions $p$ and $\theta$ (or at least $\E_\theta[c|s,a]$), then it is easy to find the optimal state-action value function
\eq{
Q^*(s,a)=\min_\pi Q^\pi(s,a)
}
using standard techniques, such as Value Iteration~\cite{bertsekas1995dynamic}.
Our interest is in model-free learning, where the model parameters are unknown.
Instead, the agent obtains samples from $p(s_{t+1}|s_t,a_t)$ and $\theta(c_t|s_t,a_t)$ through its interaction with the environment.
In this setting, the Q-learning algorithm~\cite{watkins1992q} provides a method for estimating $Q^*$.
It starts with an arbitrary $Q$, and in step $t$ upon observing $s_t$, $a_t$, $c_t$ and $s_{t+1}$, performs the update
\eq{\label{eq:qlearn}
Q(s_t,a_t) \gets{}&(1-\alpha_t)Q(s_t,a_t)
\\
\nn
&+ \alpha_t\left(c_t+\gamma\sum_{a'}\pi(a'|s_{t+1})Q(s_{t+1},a')\right),
}
with some learning rate $0\le\alpha_t\le1$, and the greedy policy for $Q$ having
\eq{\label{eq:opt_pi}
\pi(a|s)=\delta_{a,a^*(s)};&&a^*(s)=\argmin_aQ(s,a) .
}
$Q(s,a)$ is unchanged for any $(s,a)\neq(s_t,a_t)$.
If the learning rate satisfies
\eq{
\sum_t\alpha_t=\infty;&&\sum_t\alpha_t^2<\infty,
\label{alphas}
}
and the interaction itself uses an exploration policy that returns to each state-action pair infinitely many times, then $Q$ is a consistent estimator, 
converging to $Q^*$ with probability~1~\cite{watkins1992q,bertsekas1995dynamic}. 
Similarly, if the update rule~\eqref{eq:qlearn} uses a fixed update policy $\pi=\rho$, we call this algorithm Q$^\rho$-learning, because $Q$ converges to $Q^\rho$ with probability 1.

\subsection{BIAS AND EARLY COMMITMENT}\label{sec:bias}

Despite the success of Q-learning in many situations, learning can proceed extremely slowly when there is  noise 
in the distribution, given $s_t$ and $a_t$, of either of the terms of~\eqref{eq:qdef}, namely the cost $c_t$ and the value of the next state $s_{t+1}$.
The source of this problem is a negative bias introduced by the min operator in the estimator $\min_{a'}Q(s_{t+1},a')$, when~\eqref{eq:opt_pi} is plugged into~\eqref{eq:qlearn}.

To illustrate this bias, assume that $Q(s,a)$ is an unbiased but noisy estimate of the optimal $Q^*(s,a)$.
Then Jensen's inequality for the concave min operator implies that
\eq{ 
\E[\min_{a}Q(s,a)] \le \min_{a} Q^*(s,a) ,
\label{bias}
}
with equality only when $Q$ already reveals the optimal policy by having $\argmin_aQ(s,a)=\argmin_aQ^*(s,a)$ with probability 1, so that no further learning is needed.
The expectation in~\eqref{bias} is with respect to the learning process, including any randomness in state transition, cost, exploration and internal update, given the domain.

This is an optimistic bias, causing the cost-to-go to appear lower than it is (or the reward-to-go higher).
It is the well known ``winner's curse" problem in economics and decision theory~\cite{capen1971competitive,thaler1988anomalies, van2004rational, smith2006optimizer},
and in the context of Q-learning it was studied before in~\cite{powell2007approximate, hasselt2010double,ghavamzadeh2011speedy, lee2012intelligent}.
A similar problem occurs when a function approximation scheme is used for $Q$ instead of a table, even in the absence of transition or cost noise, because 
the approximation itself introduces noise~\cite{thrun1993issues}.

%The variance is reduced by the collection of more samples of $s_t=s$, $a_t=a$, $c_t$ and $s_{t+1}$, and by using these samples to update the estimate $Q(s,a)$ .
As the sample size increases, the variance in $Q(s,a)$ decreases, which in turn reduces the bias in~\eqref{bias}.
This makes the update policy~\eqref{eq:opt_pi} more optimal, and the update increasingly similar to Value Iteration.

\subsection{THE INTERPLAY OF VALUE BIAS AND POLICY SUBOPTIMALITY}

It is insightful to consider the effect of the bias not only on the estimated value function, but also on the real value $V^\pi$ of the greedy policy~\eqref{eq:opt_pi},
since in many cases the latter is the actual output of the learning process.
The central quantity of interest here is the gap $Q^*(s,a')-V^*(s)$, in a given state $s$, between the value of a non-optimal action $a'$ and that of the optimal action.

Consider first the case in which the gap is large compared to the noise in the estimation of the $Q(s,a)$ values.  
In this case, $a'$ indeed appears suboptimal with high probability, as desired. 
Interestingly, when the gap is very small relative to the noise, the learning agent should not worry, either. 
Confusing such $a'$ for the optimal action has a limited effect on the value of the greedy policy, since choosing $a'$ is near-optimal.

We conclude 
that the real value $V^\pi$ of the greedy policy~\eqref{eq:opt_pi} is suboptimal only in the intermediate regime, when the gap is of the order of the noise, 
and neither is small. 
The effect of the noise can be made even worse by the propagation of bias between states, through updates.
Such propagation can cause large-gap suboptimal actions to nevertheless appear optimal, if they lead to a region of state-space that is highly biased.

\subsection{A DYNAMIC OPTIMISM-UNCERTAINTY LOOP}
The above considerations were agnostic to the exploration policy, but the bias reduction 
can be accelerated by an exploration policy that is close to being greedy.
%Such a policy tends to favor states with underestimated cost-to-go.
In this case, high-variance estimation is self-correcting: an estimated state value with optimistic bias draws exploration towards that state, leading to a decrease in the variance,
which in turn reduces the optimistic bias. 
This is a dynamic form of optimism under uncertainty. While in the usual case the optimism is externally imposed as an initial condition~\cite{brafman2003r},
here it is spontaneously generated by the noise and self-corrected through exploration.
%We explore this scenario in the example of Section~\ref{sec:cliff}.

The approach we propose below to reduce the variance is motivated by electing to represent the uncertainty explicitly, and not indirectly through an optimistic bias.
We notice that although \emph{in the end} of the learning process one obtains the deterministic greedy policy from $Q(a,s)$ as in~\eqref{eq:opt_pi}, 
\emph{during} the learning itself the bias in $Q$ can be ameliorated by avoiding the hard min operator, and refraining from committing to a deterministic greedy policy.
This can be achieved by adding to $Q$, at the early learning stage, a term that penalizes deterministic policies, which we consider next.

%Importantly, while G-learning is designed to reduce the accumulation of bias during learning, it also reduces the variance in the value estimation.
%As discussed above, variance reduction is an important factor in the eventual convergence of Q-learning, and facilitating it further accelerates convergence in G-learning.

\section{LEARNING WITH SOFT UPDATES}
\label{sec:learn}

\subsection{THE FREE-ENERGY FUNCTION $G$ AND G-LEARNING}

Let us adopt, before any interaction with the environment, a simple stochastic prior policy $\rho(a|s)$.
For example, we can take the uniform distribution over the possible actions.
The \emph{information cost} of a learned policy $\pi(a|s)$ is defined as
\eq{
g^\pi(s,a)=\log\tfrac{\pi(a|s)}{\rho(a|s)},
\label{info_cost}
}
and its expectation over the policy $\pi$ is the Kullback-Leibler (KL) divergence of $\pi_s=\pi(\cdot|s)$ from $\rho_s=\rho(\cdot|s)$, 
\eq{ 
\E_\pi[g^\pi(s,a)|s]=\DKL[\pi_s\|\rho_s].
}
The term~\eqref{info_cost} penalizes deviations from the prior policy and serves to regularize the optimal policy away from a deterministic action.
In the context of the MDP dynamics $p(s_{t+1}|s_t, a_t)$, similarly to~\eqref{eq:vdef}, we consider the total discounted expected information cost
\eq{\label{eq:info}
I^\pi(s)=\sum_{t\ge0}\gamma^t\E[g^\pi(s_t,a_t)|s_0=s].
}
The discounting in~\eqref{eq:vdef} and~\eqref{eq:info} is justified by imagining a horizon $T\sim\text{Geom}(1-\gamma)$, distributed geometrically with parameter $1-\gamma$.
Then the cost-to-go $V^\pi$ in~\eqref{eq:vdef} and the information-to-go $I^\pi$ in~\eqref{eq:info} are the total (undiscounted) expected $T$-step costs.

Adding the penalty term \eqref{eq:info} to the cost function~\eqref{eq:vdef} gives 
\eq{
F^{\pi}(s) &= V^\pi(s)+\tfrac1\beta I^\pi(s) ,
\label{fdef}
\\
\nn
&= \sum_{t\ge0}\gamma^t\E[\tfrac1\beta g^\pi(s_t,a_t)+c_t|s_0=s],
}
called the \emph{free-energy function} by analogy with a similar quantity in statistical mechanics~\cite{rubin2012trading}.

Here $\beta$ is a parameter that sets the relative weight between the two costs.
For the moment, we  assume that $\beta$ is fixed.
In following sections, we let $\beta$  grow as the learning proceeds.

In analogy with the $Q^{\pi}$ function~\eqref{eq:qdef}, let us define the \emph{state-action free-energy function} $G^\pi(s,a)$ as
\eq{
\MoveEqLeft G^{\pi}(s,a)=\E_\theta[c|s,a]+\gamma\E_p[F^\pi(s')|s,a]
\label{gdef}
\\
\nn
&= \sum_{t\ge0}\gamma^t \E[c_t+\tfrac\gamma\beta g^\pi(s_{t+1},a_{t+1}) ) | s_0=s,a_0=a],
}
and note that it does not involve the information term at time $t=0$, since the action $a_0=a$ is already known.
From the definitions~\eqref{fdef} and~\eqref{gdef}  it follows that
\eq{ 
F^{\pi}(s) = \sum_a \pi(a|s)  \left[ \tfrac{1}{\beta} \log \tfrac{\pi(a|s) }{\rho(a|s)} + G^{\pi}(s,a) \right].
\label{eq:opt}
} 

It is easy to verify that, given the $G$ function, the above expression for $F^{\pi}$ has gradient 0 at
%the taking the soft-greedy policy
\eq{
\pi(a|s) = \frac{ \rho(a|s)  e^{-\beta G(s,a)} }  {\sum_{a'} \rho(a'|s) e^{-\beta G(s,a')}   }    ,
\label{pp}
}
which is therefore the optimal policy.
%The optimal policy $\pi^*$ is then the soft-greedy policy for $G^*$, such that $(G^*,F^*,\pi^*)$ is a fixed point of the Bellman backward recursion~\eqref{gdef}--\eqref{pp}.

%Let $\pi^*$ be the policy that achieves the optimal values $F^*$ and $G^*$, respectively, of $F^{\pi}$ and $G^{\pi}$.
%This optimal policy satisfies the Bellman optimality equation,
%\eq{\label{eq:opt}
%F^{*}(s) = \min_{\pi}  \sum_a \pi(a|s)  \left[ \tfrac{1}{\beta} \log \tfrac{\pi(a|s) }{\rho(a|s)} + G^{*}(s,a) \right],
%}
%and the solution for $\pi$ is easily seen to be 
%\eq{\label{eq:soft}
%\pi^*(a|s)=\frac{\rho(a|s)e^{-\beta G^*(s,a)}}{\sum_{a'}\rho(a'|s)e^{-\beta G^*(s,a')}}.
%}

The policy~\eqref{pp} is the soft-min operator applied to $G$, with inverse-temperature $\beta$.
When $\beta$ is small, the information cost is dominant, and $\pi$ approaches the prior $\rho$.
When $\beta$ is large, we are willing to diverge much from the prior to reduce the external cost, and $\pi$ approaches the deterministic greedy policy for $G$.

Evaluated at the soft-greedy policy~\eqref{pp}, the free energy~\eqref{eq:opt} is
\eq{
F^\pi(s)=-\tfrac1\beta\log\sum_{a}\rho(a|s)e^{-\beta G^\pi(s,a)},
}
and plugging this expression into~\eqref{gdef}, we get that the optimal $G^*$ 
is a fixed point of the equation
\eq{
G^*(s,a) ={}&\E_\theta[c|s,a]
\label{eq:hg}
 \\
 \nn
 &-\tfrac\gamma\beta\E_p\left[\log\sum_{a'}\rho(a'|s')e^{-\beta G^*(s',a')}\right]
\\
\equiv{}& \Bb^*[G^*]_{(s,a)}.
\label{eq:hq}
}
%\subsection{G-learning}\label{sec:glearn}
Based on the above expression, we introduce G-learning as 
an off-policy TD-learning algorithm~\cite{sutton1998reinforcement}, that learns the optimal $G^*$ from the interaction with the environment by applying the update rule
\eq{
\MoveEqLeft G(s_t,a_t) \gets  (1-\alpha_t)G(s_t,a_t) 
\label{glearn}
\\
\nn
& + \alpha_t \left( c_t - \tfrac{\gamma}{\beta} \log \left( \sum_{a'} \rho(a'|s_{t+1}) e^{-\beta G (s_{t+1},a')}  \right) \right).
}

%Interestingly, this update rule can be interpreted both as an off-policy Q-learning-like update,
%\eq{
%G(s_t,a_t)  &\gets & (1-\alpha_t) G(s_t,a_t) + \alpha_t \Bigg( c_t
%\\
%\nn
%&& \!\!\!\!\!\!\!\!\!\!\!\!\!\!\!\!\!\!\!\!\!\!\!\!\!\!\!\!\!\!\!  +   \gamma \sum_{a'} \pi(a'|s_{t+1})  \left[  \tfrac{1}{\beta} \log \tfrac{\pi(a'|s_{t+1}) }{\rho(a'|s_{t+1})}   +  G(s_{t+1},a') \right] \Bigg) ,
%\nonumber 
%} 
%as well as an on-policy SARSA-like update~\cite{sutton1998reinforcement,szepesvari2010algorithms,van2009theoretical},
%\eq{
%G(s_t,a_t) &\gets & (1-\alpha_t)G(s_t,a_t) 
%\label{sarsa}
%\\
%\nn
%&& \!\!\!\!\!\!\!\!\!\!\!\!\!\!\!\!\!\!\!\!\!\!\!\!\!\!\!\!  + \alpha_t \left( c_t +  \gamma  \left[  \tfrac{1}{\beta} \log    \tfrac{\pi(a_{t+1}|s_{t+1}) }{\rho(a_{t+1}|s_{t+1})} 
%+ G(s_{t+1},a_{t+1})  \right] \right) ,
%} 
%as long as the policy  $\pi$ is taken to be
%\eq{
%\pi(a|s) = \frac{ \rho(a|s)  e^{-\beta G(s,a)} }  {\sum_{a'} \rho(a'|s) e^{-\beta G(s,a')}   }    .
%\label{pp}
%} 
%In particular, the dependence on $a_{t+1}$ in~\eqref{sarsa} disappears. 

\subsection{THE ROLE OF THE PRIOR}
Clearly the choice of the prior policy $\rho$ is significant in the performance of the algorithm.
The prior policy can encode any prior knowledge that we have about the domain, and this can improve the convergence if done correctly.
However an incorrect prior policy can hinder learning.
We should therefore choose a prior policy that represents all of our prior knowledge, but nothing more.
This prior policy has maximal entropy given the prior knowledge~\cite{jaynes2003probability}.

In our examples in Section~\ref{sec:examples}, we use the uniform prior policy, representing no prior knowledge.
Both in Q-learning and in G-learning, we could utilize the prior knowledge that moving into a wall is never a good action, by eliminating those actions.
One advantage of G-learning is that it can utilize softer prior knowledge.
For example, a prior policy that gives lower probability for moving into a wall represent the prior knowledge that such an action is usually (but not always) harmful, a type of knowledge that cannot be utilized in Q-learning.

We have presented G-learning in a fully parameterized formulation, where the function $G$ is stored in a lookup table.
Practical applications of Q-learning often resort to approximating the function $Q$ through function approximations, 
such as linear expansions or  neural networks~\cite{sutton1998reinforcement,powell2007approximate,szepesvari2010algorithms, busoniu2010reinforcement,mnih2015human}.
Such an approximation generates inductive bias, which is another form of implicit prior knowledge.
While G-learning is introduced here in its table form, preliminary results indicate that its benefits carry over to function approximations, despite the challenges posed by this extension.

\subsection{CONVERGENCE}

In this section we study the convergence of  $G$ under the update rule~\eqref{glearn}.
Recall that the supremum norm is defined as $| x |_{\infty} = \max_{i}|x_i|$.
We need the following Lemma, proved in Appendix~\ref{sec:apx}. 
\begin{lem}
The operator $\Bb^*[G]_{(s,a)}$  defined in~\eqref{eq:hq} is a contraction in the supremum norm, 
\eq{
 \big| \Bb^*[G_1] - \Bb^*[G_2]  \big|_{\infty} \leq  \gamma \big| G_1 - G_2 \big|_{\infty}.
}

\end{lem}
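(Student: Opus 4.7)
The plan is to show that the only nontrivial piece of $\Bb^*$ is the log-sum-exp ``soft-min'' term, and that this map is non-expansive in $\|\cdot\|_\infty$ at each state; multiplying by the discount $\gamma$ then yields the claimed contraction factor. First I would subtract $\Bb^*[G_1]$ from $\Bb^*[G_2]$: the cost expectation $\E_\theta[c|s,a]$ cancels, leaving
\eq{
\Bb^*[G_1]_{(s,a)}-\Bb^*[G_2]_{(s,a)}
= \tfrac{\gamma}{\beta}\,\E_p\!\left[\log\tfrac{\sum_{a'}\rho(a'|s')e^{-\beta G_2(s',a')}}{\sum_{a'}\rho(a'|s')e^{-\beta G_1(s',a')}}\right].\nn
}

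Next I would prove the key pointwise bound: for every $s'$,
\eq{
\left|\tfrac{1}{\beta}\log\tfrac{\sum_{a'}\rho(a'|s')e^{-\beta G_2(s',a')}}{\sum_{a'}\rho(a'|s')e^{-\beta G_1(s',a')}}\right|\le \|G_1-G_2\|_\infty.\nn
}
To see this, set $\epsilon=\|G_1-G_2\|_\infty$. Then $G_2(s',a')\le G_1(s',a')+\epsilon$ for every $a'$, which since $\beta>0$ gives $e^{-\beta G_2(s',a')}\ge e^{-\beta\epsilon}e^{-\beta G_1(s',a')}$; multiplying by the nonnegative weights $\rho(a'|s')$ and summing yields $\sum_{a'}\rho(a'|s')e^{-\beta G_2(s',a')}\ge e^{-\beta\epsilon}\sum_{a'}\rho(a'|s')e^{-\beta G_1(s',a')}$. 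Taking $\tfrac1\beta\log$ of both sides gives one direction of the bound, and swapping the roles of $G_1$ and $G_2$ yields the other.

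Finally, I would combine these two steps: apply Jensen's inequality (or simply monotonicity of expectation of $|\cdot|$) to move the absolute value inside $\E_p$, use the pointwise bound above, and observe that the resulting estimate $\gamma\|G_1-G_2\|_\infty$ is uniform in $(s,a)$, so taking the supremum on the left yields the contraction statement. The only substantive step is the Lipschitz/non-expansion property of the weighted log-sum-exp, and I expect the sign bookkeeping in that step to be the main place where care is required; the rest is a routine cancellation-and-expectation argument that parallels the standard proof of the contraction property of the Bellman operator for Q-learning.
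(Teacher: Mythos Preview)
Your argument is correct. The log-sum-exp sandwich you give is a clean and elementary way to establish the $1$-Lipschitz property of the soft-min, and the rest follows exactly as you say.

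The paper takes a different route. It introduces, for each fixed policy $\pi$, the affine operator
\eq{
\Bb^{\pi}[G]_{(s,a)}=k^\pi(s,a)+\gamma\sum_{s',a'}p(s'|s,a)\,\pi(a'|s')\,G(s',a'),\nn
}
which is a standard $\gamma$-contraction since it is an expectation of $G$-values. It then observes the variational identity $\Bb^*[G]_{(s,a)}=\min_\pi\Bb^{\pi}[G]_{(s,a)}$, with the minimum attained at the soft-greedy policy $\pi_G$ from~\eqref{pp}, and finishes with the usual ``minimum of a family of contractions is a contraction'' trick: at the maximizing $(s,a)$, bound $|\Bb^{\pi_{G_1}}[G_1]-\Bb^{\pi_{G_2}}[G_2]|$ by $\max_i|\Bb^{\pi_{G_i}}[G_1]-\Bb^{\pi_{G_i}}[G_2]|\le\gamma\|G_1-G_2\|_\infty$. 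This mirrors the textbook proof for the hard-min Bellman operator in Q-learning, and it makes explicit where the soft-greedy policy enters. Your direct approach avoids introducing $\Bb^\pi$ altogether and is arguably shorter; the paper's approach buys a closer structural parallel to the classical case and highlights that $\Bb^*$ is a pointwise infimum of linear contractions.
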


The update equation~\eqref{glearn} of the algorithm can be written as a stochastic iteration equation
\eq{ 
G_{t+1}(s_{t},a_{t}) ={}& (1-\alpha_t)G_t(s_t,a_t)
\label{iter}
\\
&+ \alpha_t ( \Bb^*[G_{t}]_{(s_t, a_t)} + z_t(c_t, s_{t+1}) )
\nn
} 
where the random variable $z_t$ is
\eq{
z_t(c_t, s_{t+1}) ={}& -\Bb^*[G_{t}]_{(s_t, a_t)}
\\
& + c_t - \tfrac{\gamma}{\beta} \log  \sum_{a'} \rho(a'|s_{t+1}) e^{-\beta G_t (s_{t+1},a') } .
\nn
} 
Note that $z_t$ has expectation 0.  Many results exist for iterative equations of the type~\eqref{iter}.
In particular, given conditions~\eqref{alphas} for $\alpha_t$, the contractive nature of $\Bb^*$,
infinite  visits to each pair $(s_t,a_t)$ and assuming that $|z_t|< \infty$ , 
$G_t$ is guaranteed to converge to the optimal $G^*$ with probability~1~\cite{bertsekas1995dynamic, borkar2008stochastic}.

\section{SCHEDULING \boldmath$\beta$}\label{sec:beta}

In the previous section, we showed that running G-learning with a fixed $\beta$ converges, with probability 1, to the optimal $G^*$ for that $\beta$, given by the recursion in~\eqref{gdef}--\eqref{pp}.
When $\beta=\infty$, the equations for $G^*$ and $F^*$ degenerate into the equations for $Q^*$ and $V^*$, and G-learning becomes Q-learning.
When $\beta=0$, the update policy $\pi$ in~\eqref{pp} is equal to the prior $\rho$.
This case, denoted Q$^\rho$-learning, converges to $Q^\rho$.

In an early stage of learning, Q$^\rho$-learning has an advantage over Q-learning, because it avoids committing to a deterministic policy based on a noisy $Q$ function.
In a later stage of learning, when $Q$ is a more precise estimate of $Q^*$, Q-learning gains the advantage by updating with a better policy than the prior.
This is demonstrated in section~\ref{sec:grid}.

We would therefore like to schedule $\beta$ so that G-learning makes a smooth transition from Q$^\rho$-learning to Q-learning, 
just at the right pace to enjoy the early advantage of the former and the late advantage of the latter.
As we argue below, such a $\beta$ always exists. 

\subsection{ORACLE SCHEDULING}

To consider the effect of the $\beta$ scheduling on the correction of the bias~\eqref{bias}, suppose that during learning we reach some $G$ that is an unbiased estimate of $G^*$.
$G(s_t,a_t)$ would remain unbiased if we update it towards
\eq{\label{eq:unbiased}
c_t+\gamma G(s_{t+1},a^*)
}
with
\eq{
a^*=\argmin_{a'}G^*(s_{t+1},a'),
}
but we do not have access to this optimal action.
If we use the update rule~\eqref{glearn} with $\beta=0$, we update $G(s_t,a_t)$ towards
\eq{
c_t+\gamma\sum_{a'}\rho(a'|s_{t+1})G(s_{t+1},a'),
}
which is always at least as large as~\eqref{eq:unbiased}, creating a positive bias.
If we use $\beta=\infty$, we update $G(s_t,a_t)$ towards
\eq{
c_t+\gamma\min_{a'}G(s_{t+1},a'),
}
which creates a negative bias, as explained in Section~\ref{sec:bias}.
Since the right-hand side of~\eqref{glearn} is continuous and monotonic in $\beta$, there must be some $\beta$ for which this update rule is unbiased.

This is a non-constructive proof for the existence of a $\beta$ schedule that keeps the value estimators unbiased (or at least does not accumulate additional bias).
We can imagine a scheduling oracle, and a protocol for the agent by which to consult the oracle and obtain the $\beta$ for its soft updates.
At the very least, the oracle must be told the iteration index $t$, but it can also be useful to let $\beta$ depend on any other aspect of the learning process, particularly the current world state $s_t$.

\subsection{PRACTICAL SCHEDULING}\label{sec:beta_practical}
A good schedule should increase $\beta$ as learning proceeds, because as more samples are gathered 
the variance of $G$ decreases, allowing more deterministic policies. 
In the examples of Section~\ref{sec:examples} we adopted the linear schedule
\eq{\label{eq:sched}
\beta_t=kt,
}
with some constant $k>0$. 
Another possibility that we explored was to make $\beta$ inversely proportional to a running average of the Bellman error, which decreases as learning progresses. 
The results were similar to the linear schedule. 

% We can also allow $\beta$ to depend on the current state, by taking
% \eq{\label{eq:linear_beta}
% \beta_t=cn_t(s_t),
% }
% with $n_t(s_t)$ the number of samples $(s,a,c,s')$ where $s=s_t$.

%A more involved schedule takes into account the Bellman error.

% The advantage of the state-dependent $\beta$ schedule~\eqref{eq:linear_beta} is that it better utilizes the collected samples.
% In states that have not been visited frequently, there are too few samples, and the estimators are too noisy, to allow the agent to diverge from the stochastic prior towards a good policy.
% In states that have been sampled more often, on the other hand, the agent can use these samples and employ a more deterministic policy.
% With a state-independent $\beta$ schedule, on the other hand, the agent must be cautious in utilizing the collected samples, because it doesn't know how many of them are relevant for the current state.
The optimal parameter $k$ can be obtained by performing initial runs with different values of $k$ and picking the value whose learned policy gives empirically the lower cost-to-go. 
Although this exploration would seem costly compared to other algorithms for which no parameter tuning is needed, these initial runs do not need to be carried for many iterations.
Moreover, in many situations the agent is confronted with a class of similar domains, and  tuning $k$ in a few initial domains 
leads to an improved learning for the whole class. 
This is the case in the domain-generator example in Section~\ref{sec:grid}.

\section{RELATED WORK}
\label{sec:related}

The connection between domain noise or function approximation, and the statistical bias in the $Q$ function, was first discussed in~\cite{thrun1993issues,powell2007approximate}.
An interesting modification of Q-learning to address this problem is Double-Q-learning~\cite{hasselt2010double,deep2q}, which uses two estimators for the $Q$ function to alleviate 
the bias. Other modifications of Q-learning that attempt to reduce or correct the bias are suggested in~\cite{ghavamzadeh2011speedy,lee2012intelligent}.

An early approach to Q-learning in continuous noisy domains was to learn, instead of the value function, the advantage 
function $A(s,a) = Q(s,a)-V(s)$~\cite{baird1994reinforcement}.  The algorithm represents $A$ and $V$ separately, 
and the optimal action is determined from $A(s,a)$ as $a^*(s)=\argmin_aA(s,a)$. In noisy environments, learning $A$ 
is shown in some examples to be faster than learning  $Q$~\cite{baird1994reinforcement,baird1995advantage}. 

More recently, it was shown that the advantage learning algorithm is a gap-increasing operator~\cite{bellemare2016increasing}.
As discussed in Section~\ref{sec:bias}, the action gap is a central factor in the generation of bias, and increasing the gap should also help reduce the bias.
In Section~\ref{sec:grid} we compare our algorithm to the consistent Bellman operator $\C T_C$, one of the gap-increasing algorithms introduced in~\cite{bellemare2016increasing}.

For other works that study the effect of noise in Q-learning, although without identifying the bias~\eqref{bias}, see~\cite{pendrith1994reinforcement, pendrith1997estimator, moreno2006noisy}.

Information considerations have received attention in recent years in various machine learning settings,
with the free energy $F^\pi$ and similar quantities used as a design principle for policies in known MDPs~\cite{rubin2012trading, todorov2006linearly, kappen2012optimal}.
Other works have used related methods for reinforcement learning~\cite{todorov2009efficient,peters2010relative, rawlik2010approximate,azar2012dynamic,still2012information}.
A KL penalty similar to ours is used in~\cite{still2012information}, in settings with known reward and transition functions, to encourage ``curiosity".

Soft-greedy policies have been used before for exploration~\cite{sutton1998reinforcement,tokic2011value}, but to our knowledge G-learning is the first TD-learning algorithm to explicitly use soft-greedy policies in its updates.

Particularly relevant to our work is the approach studied in~\cite{peters2010relative}.
There the policy is iteratively improved by optimizing it in each iteration under the constraint that it only diverges slightly, in terms of KL-divergence, from the empirical distribution generated by the previous policy.
In contrast, in G-learning we measure the KL-divergence from a fixed prior policy, and in each iteration allow the divergence to grow larger by increasing $\beta$.
Thus the two methods follow different information-geodesics from the stochastic prior policy to more and more deterministic policies.

This distinction is best demonstrated by considering the $\Psi$-learning algorithm presented in~\cite{rawlik2010approximate, azar2012dynamic}, based on the same approach as~\cite{peters2010relative}.
It employs the update rule
\eq{\label{eq:psilearn}
\Psi(s_t,a_t) \gets{}&  \Psi(s_t,a_t) \\
& + \alpha_t ( c_t + \gamma\bar\Psi(s_{t+1}) - \bar\Psi(s_t) ),
\nn
}
with
\eq{
\bar\Psi(s) = -\log\sum_a\rho(a|s)e^{-\Psi(s,a)} , 
}
which is closely related to our update of $G$ in~\eqref{glearn}.

Apart from lacking a $\beta$ parameter, the most important difference is that the update of $\Psi$ involves subtracting $\alpha_t\bar\Psi(s_t)$, whereas the update of $G$ involves subtracting $\alpha_tG(s_t,a_t)$.
This seemingly minor modification has a large impact on the behavior of the two algorithms.
The update of $G$ is designed to pull it towards the optimal state-action free energy $G^*$, for all state-action pairs.
In contrast, subtracting the log-partition $\bar\Psi(s_t)$, in the long run pulls only $\Psi(s_t,a^*)$, with $a^*$ the optimal action, towards its true value, while for the other actions the values grow to infinity.
In this sense, the $\Psi$-learning update~\eqref{eq:psilearn} is an information-theoretic gap-increasing Bellman operator~\cite{bellemare2016increasing}.

The growth to infinity of suboptimal values separates them from the optimal value, and drives the algorithm to convergence.
In G-learning, this parallels the increase in $\beta$ with the accumulation of samples. 
However, there is a major benefit to keeping $G$ reliable in all its parameters, and controlling it with a separate $\beta$ parameter.
In $\Psi$-learning, the $\Psi$ function penalizes actions it deems suboptimal.
If early noise causes an error in this penalty, the algorithm needs to unlearn it - a similar drawback to that of Q-learning.
In Section~\ref{sec:examples}, we demonstrate the improvement offered by G-learning.

\section{EXAMPLES}
\label{sec:examples}

This section illustrates how  G-learning improves on existing model-free learning algorithms in several settings. 
The domains we use are clean and simple, to demonstrate that the advantages of G-learning are inherent to the algorithm itself.

We schedule the learning rate $\alpha_t$ as
\eq{
\alpha_t=n_t(s_t,a_t)^{-\omega}\,,
}
where $n_t(s_t,a_t)$ is the number of times the pair $(s_t,a_t)$ was visited. 
This scheme is widely used, and is consistent with~\eqref{alphas} for $\omega\in(\nicefrac12,1]$.
We choose $\omega=0.8$, which is within the range suggested in~\cite{even2004learning}.

We schedule $\beta$ linearly, as discussed in Section~\ref{sec:beta_practical}.
In each case, we start with 5 preliminary runs of G-learning with various linear coefficients, 
and pick the coefficient with the lowest empirical cost. 
This coefficient is used in the subsequent test runs, whose results are plotted in Figure~\ref{fig:res}.

In all cases, we use a uniform prior policy $\rho$, a discount factor $\gamma=0.95$, and 0 for the initial values ($Q_0=0$ in Q-learning, and similarly in the other algorithms).
Except when mentioned otherwise, we employ random exploration, where $s_t$ and $a_t$ are chosen uniformly at the beginning of each time step, independently of any previous sample.
This exploration technique is useful when comparing update rules, while controlling for the exploration process.

%\subsection{Measures}
%
%\begin{description}
%\item[$n$]number of states
%\item[$N$]number of runs
%\item[$i$]index over runs
%\item[$t$]index over algorithm iterations (time)
%\item[$V_t^i(s)=\min_aQ_t^i(s,a)$]or similarly for other algorithms ($G_t^i$ for G-learning etc.)
%\item[$\pi_t^i(s)=\argmin_aQ_t^i(s,a)$]
%\end{description}
%
%\begin{itemize}
%\item Bias:
%\eq{
%\tfrac1n\sum_s\tfrac1N\sum_i(V_t^i(s)-V^*(s))
%}
%\item Bias confidence interval:
%\eq{
%\sqrt{\tfrac1n\sum_s\left(\tfrac1N\sum_i(V_t^i(s))^2-\left(\tfrac1N\sum_iV_t^i(s)\right)^2\right)}
%}
%\item Mean absolute error:
%\eq{
%\tfrac1n\sum_s\tfrac1N\sum_i|V_t^i(s)-V^*(s)|
%}
%\item Mean absolute error confidence interval:
%\eq{
%\sqrt{\tfrac1n\sum_s\left(\tfrac1N\sum_i(V_t^i(s)-V^*(s))^2-\left(\tfrac1N\sum_i|V_t^i(s)-V^*(s)|\right)^2\right)}
%}
%\item Root mean square error:
%\eq{
%\sqrt{\tfrac1n\sum_s\tfrac1N\sum_i(V_t^i(s)-V^*(s))^2}
%}
%\item Fraction of optimal actions:
%\eq{
%\tfrac1n\sum_s\tfrac1N\sum_i1_{\pi_t^i(s)\in\argmin_aQ^*(s,a)}
%}
%\item Policy value:
%\eq{
%\tfrac1n\sum_s\tfrac1N\sum_iV^{\pi_t^i}(s)
%}
%\end{itemize}

\subsection{GRIDWORLD}\label{sec:grid}
Our first set of examples occurs in a gridworld of $8\times8$ squares, with 
some unavailable squares occupied by walls shown in black (Figure~\ref{fig:grid}).
The lightest square is the goal, and reaching it ends the episode.

\begin{figure}[!]
\centering
\includegraphics[width=0.3\textwidth]{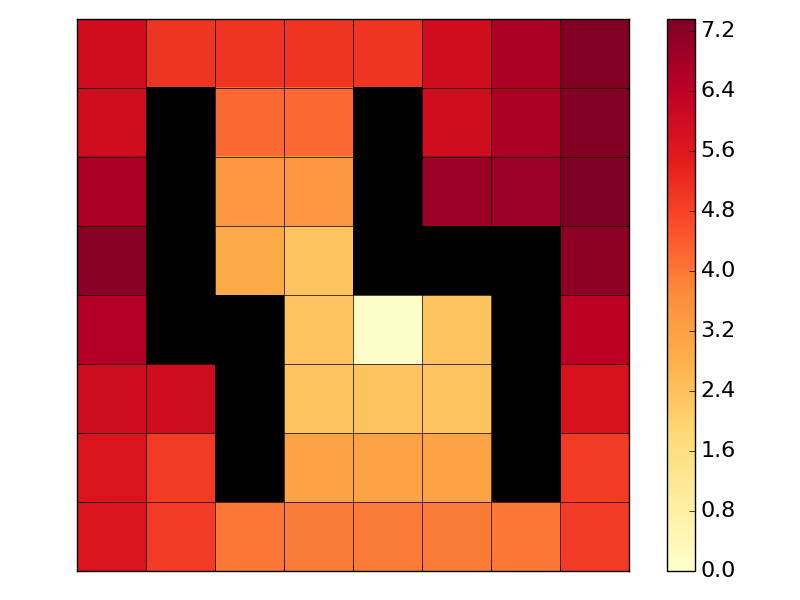}
\caption{ Gridworld domain. The agent can choose an adjacent square as the target to move to, and then may end up stochastically in a square adjacent to that target. 
The color scale indicates the optimal values $V^*$ with a fixed cost of 1 per step.}
\label{fig:grid}
\end{figure}

At each time step, the agent can choose to move one square in any of the 8 directions (including diagonally), or stay in place.
If the move is blocked by a wall or the edge of the board, it effectively attempts to stay in place.
With some probability, the action performed  by the agent is further followed by an additional random slide:
with probability $0.15$ to each vertically or horizontally adjacent available position, 
and with probability $0.05$  to each diagonally adjacent available position. 

The noise associated with these random transitions can be enhanced further by the possible variability in the costs incurred along the way. 
We consider three cases.
In the first case, the cost in each step is fixed at 1.
In the second case, the cost in each step is distributed normally i.i.d, with mean 1 and standard deviation 2.
In the third case we define a distribution over domains, such that at the time of domain-generation the mean cost for each state-action is distributed uniformly i.i.d over $[1,3]$.
Once the domain has been generated and interaction begins, the cost itself in each step is again distributed normally i.i.d, with the generated mean and standard deviation 4.

We attempt to learn these domains using various algorithms.
Figure~\ref{fig:res} summarizes the results for Q-learning, G-learning, Double-Q-learning~\cite{hasselt2010double}, $\Psi$-learning~\cite{rawlik2010approximate,azar2012dynamic} and the consistent Bellman operator $\C T_C$ of~\cite{bellemare2016increasing}.
We also include Q$^\rho$-learning, which performs updates as in~\eqref{eq:qlearn} towards the prior policy $\rho$.
Comparison with Speedy-Q-learning~\cite{ghavamzadeh2011speedy} is omitted, since it showed no improvement over vanilla Q-learning in these settings.
In our experiments, these algorithms had comparable running times.

The $\beta$ scheduling used in G-learning is linear, with the coefficient $k$ equal to $10^{-3}$, $10^{-4}$, $5\cdot10^{-5}$ and $10^{-6}$, respectively for the fixed-cost, noisy-cost, domain-generator and cliff domains (see Section~\ref{sec:cliff}).

For each case, Figure~\ref{fig:res} shows the evolution over 250,000 algorithm iterations of the following  three measures, averaged over $N=100$ runs:
\begin{enumerate}
 \item 
Empirical bias, defined as
\eq{
\tfrac{1}{Nn} \sum_{i=1}^N \sum_{s=1}^n (V_{i,t}(s)-V_i^*(s)),\label{eq:bias}
}
where $i$ indexes the $N$ runs and $s$ the $n$ states.
Here $V_{i,t}$ is the greedy value based on the estimate obtained by each algorithm ($Q$, $G$, etc.), in iteration $t$ of run $i$.
The optimal value $V_i^*$, computed via Value Iteration, varies between runs in the domain-generator case.

\item 
Mean absolute error in $V$
\eq{
\tfrac{1}{Nn} \sum_{i=1}^N \sum_{s=1}^n |V_{i,t}(s)-V_i^*(s)|.\label{eq:abserr}
}
A low bias could result from the cancellation of terms with high positive and negative biases. 
A convergence in the absolute error is more indicative of the actual convergence of the value estimates.

\item Increase in cost-to-go, relative to the optimal policy
\eq{
\tfrac{1}{Nn} \sum_{i=1}^N \sum_{s=1}^n(V^{\pi_{i,t}}(s)-V_i^*(s)).\label{eq:value}
}
This measures the quality of the learned policy. 
Here $\pi_{i,t}$ is the greedy policy based on the state-action value estimates, and $V^{\pi_{i,t}}$ is its value in the model, computed via Value Iteration. 
\end{enumerate}

An algorithm is better when these measures reach zero faster. 
As is clear in Figure~\ref{fig:res}, in  the domains with  noisy cost (Rows 2 and 3), G-learning dominates over all the other competing algorithms by the three measures. 
The results are statistically significant, but plotting confidence intervals would clutter the figure.

An important and surprising point of Figure~\ref{fig:res} is that Q$^\rho$-learning always outperforms Q-learning initially, before degrading. 
The reason is that the Q-learning updates initially rely
on very few samples, so these harmful updates need to be undone by later updates.
Q$^\rho$-learning, on the other hand, updates in the direction of a uniform prior.
This gives an early advantage in mapping out the local topology of the problem, before long-range effects start pulling the learning towards the suboptimal $Q^\rho$.

The power of G-learning is that it enjoys the early advantage of Q$^\rho$-learning, and smoothly 
transitions to the convergence advantage of Q-learning.
When $\beta$ is small, the information cost $g_t$~\eqref{info_cost} outweighs the external costs $c_t$, and we update towards $\rho$.
As samples keep coming in, and our estimates improve, $\beta$ increases, and the updates gradually lean more towards a cost-optimizing policy.
Unlike early stages in Q-learning, at this point $G_t$ is already a good estimate, and we avoid overfitting.
As mentioned above, Figure~\ref{fig:res} shows that this effect is more manifest in noisier scenarios.

Finally, Figure~\ref{fig:bellman} shows running averages of the Bellman error for the different algorithms considered. 
The Bellman error in G-learning is the coefficient multiplying $\alpha_t$ in~\eqref{glearn},
\eq{
\Delta G_t  \equiv{}& c_t - \tfrac{\gamma}{\beta} \log \left( \sum_{a'} \rho(a'|s_{t+1}) e^{-\beta G_t(s_{t+1},a')}  \right)  
\nn
\\
&- G_t(s_{t},a).
}
When learning  ends and $G=G^*$, the expectation of $\Delta G_t$ is zero (see \eqref{eq:hg}). Similar definitions hold for the other 
learning algorithms we compare with. As is clear from Figure~\ref{fig:bellman}, G-learning reaches zero average Bellman error faster than the competing methods,
even while $\beta$ is still increasing in order to make $G^*$ converge to $Q^*$.

 \makeatletter
\renewcommand{\p@subfigure}{}% Void parent macro for figures
\makeatother
 
 \begin{figure*}[!t]
    \centering

    \begin{subfigure}[b]{0.32\textwidth}
        \includegraphics[width=\textwidth]{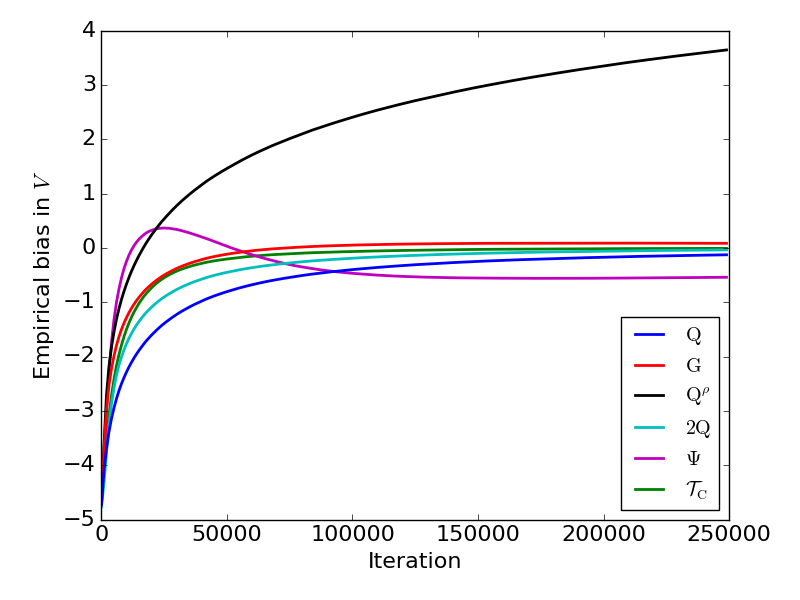}
        %\caption{}
        %\label{fig:fixed1}
    \end{subfigure}
    ~ %add desired spacing between images, e. g. ~, \quad, \qquad, \hfill etc. 
      %(or a blank line to force the subfigure onto a new line)
    \begin{subfigure}[b]{0.32\textwidth}
        \includegraphics[width=\textwidth]{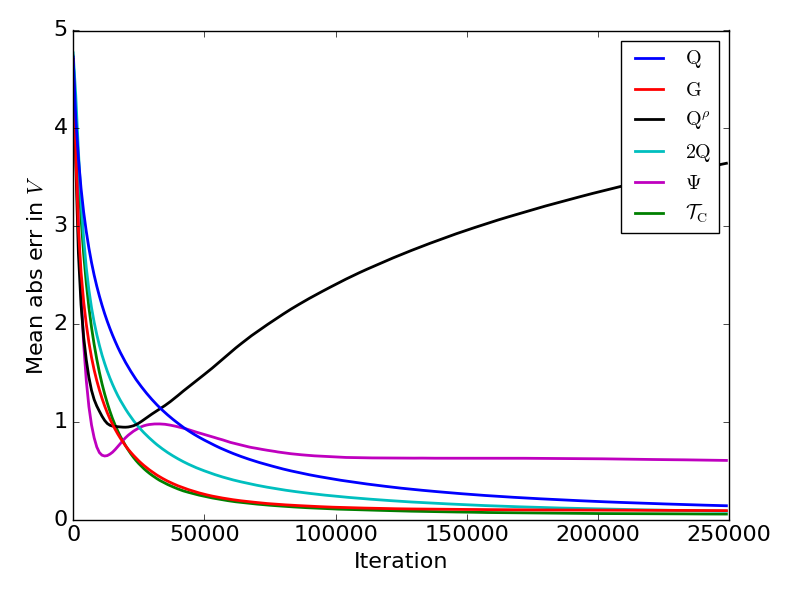}
        %\caption{}
        %\label{fig:fixed2}
    \end{subfigure}
    ~ %add desired spacing between images, e. g. ~, \quad, \qquad, \hfill etc. 
    %(or a blank line to force the subfigure onto a new line)
    \begin{subfigure}[b]{0.32\textwidth}
        \includegraphics[width=\textwidth]{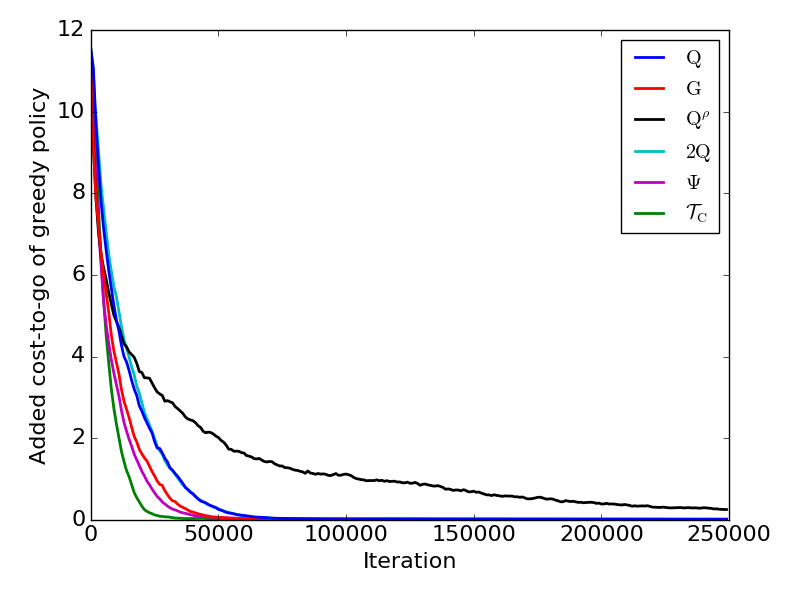}
        %\caption{}
        %\label{fig:fixed3}
    \end{subfigure}
    %\caption{Pictures of animals}\label{fig:animals}
\\
    \begin{subfigure}[b]{0.32\textwidth}
        \includegraphics[width=\textwidth]{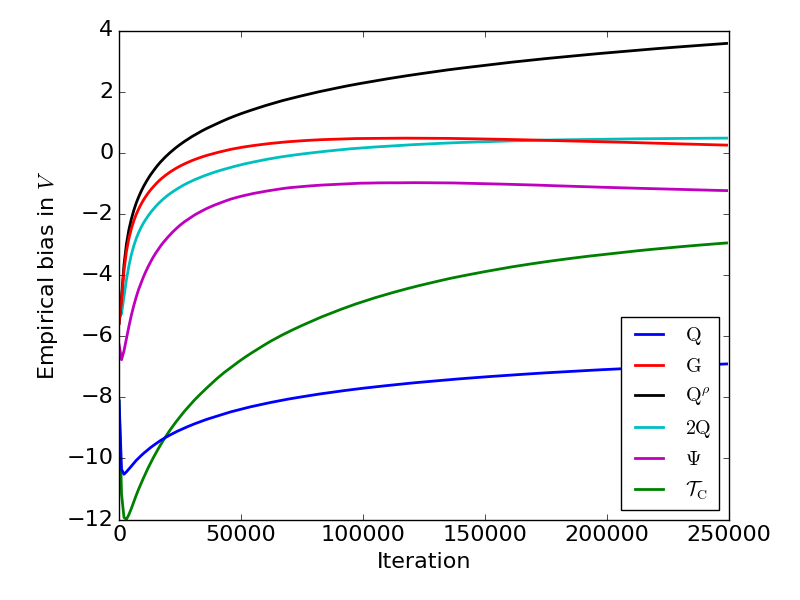}
        %\caption{}
        %\label{fig:noise1}
    \end{subfigure}
    ~ %add desired spacing between images, e. g. ~, \quad, \qquad, \hfill etc. 
      %(or a blank line to force the subfigure onto a new line)
    \begin{subfigure}[b]{0.32\textwidth}
        \includegraphics[width=\textwidth]{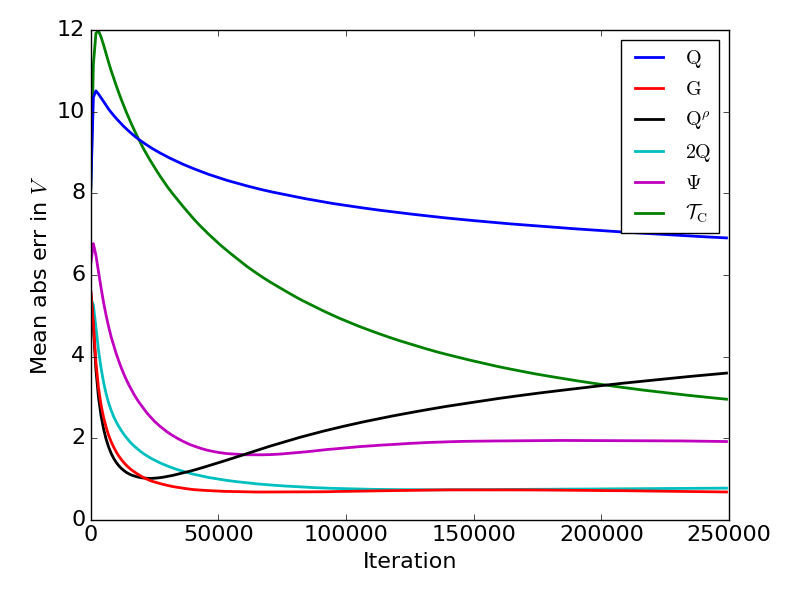}
        %\caption{}
        %\label{fig:noise2}
    \end{subfigure}
    ~ %add desired spacing between images, e. g. ~, \quad, \qquad, \hfill etc. 
    %(or a blank line to force the subfigure onto a new line)
    \begin{subfigure}[b]{0.32\textwidth}
        \includegraphics[width=\textwidth]{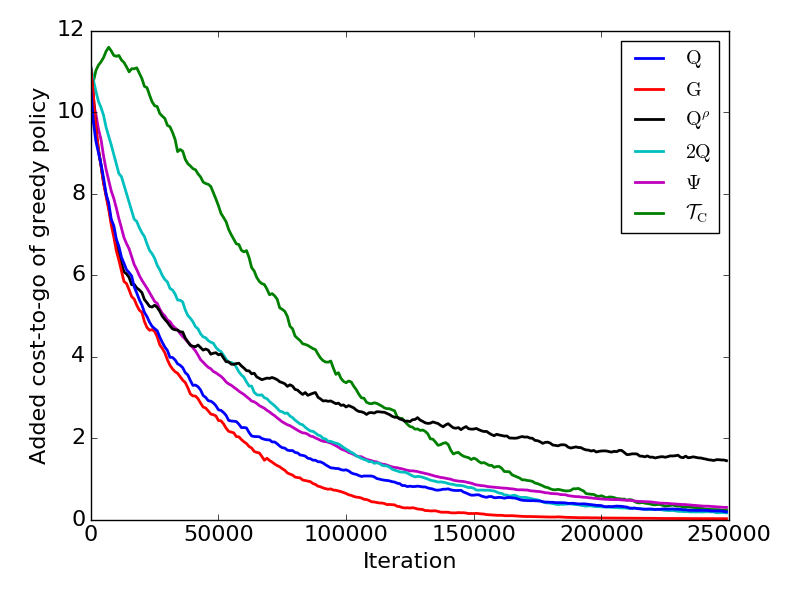}
        %\caption{}
        %\label{fig:noise3}
    \end{subfigure}
    %\caption{Pictures of animals}\label{fig:animals}    
\\
    \begin{subfigure}[b]{0.32\textwidth}
        \includegraphics[width=\textwidth]{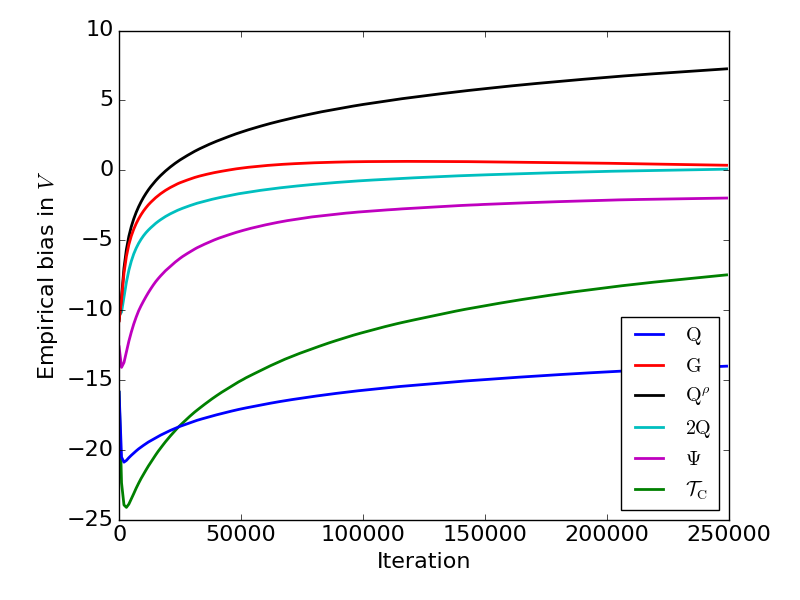}
        %\caption{}
        %\label{fig:dist1}
    \end{subfigure}
    ~ %add desired spacing between images, e. g. ~, \quad, \qquad, \hfill etc. 
      %(or a blank line to force the subfigure onto a new line)
    \begin{subfigure}[b]{0.32\textwidth}
        \includegraphics[width=\textwidth]{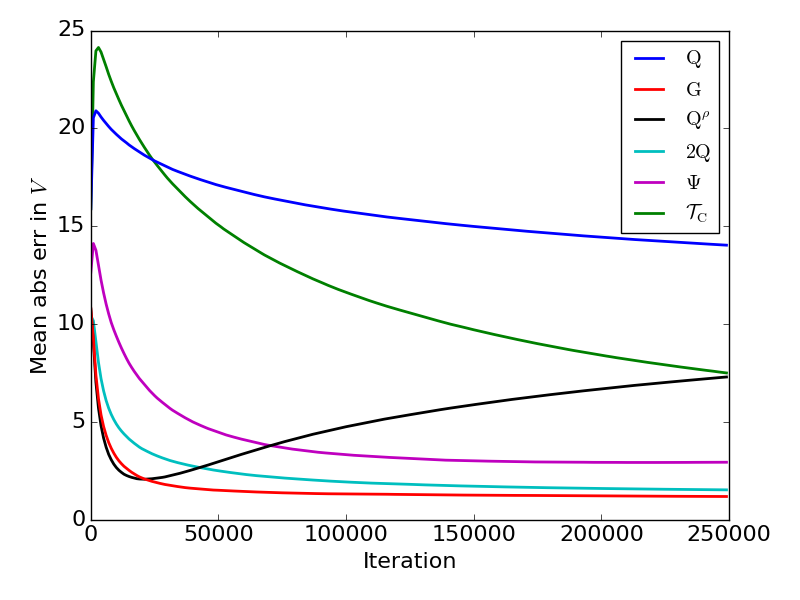}
        %\caption{}
        %\label{fig:dist2}
    \end{subfigure}
    ~ %add desired spacing between images, e. g. ~, \quad, \qquad, \hfill etc. 
    %(or a blank line to force the subfigure onto a new line)
    \begin{subfigure}[b]{0.32\textwidth}
        \includegraphics[width=\textwidth]{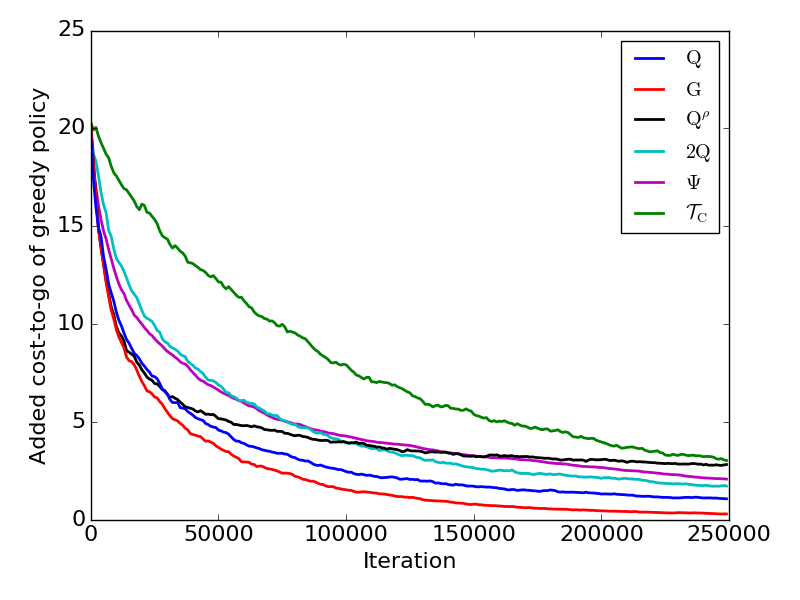}
       % \caption{}
       % \label{fig:dist3}
    \end{subfigure}
    %\caption{Pictures of animals}\label{fig:animals}    
    \\
    \begin{subfigure}[b]{0.32\textwidth}
        \includegraphics[width=\textwidth]{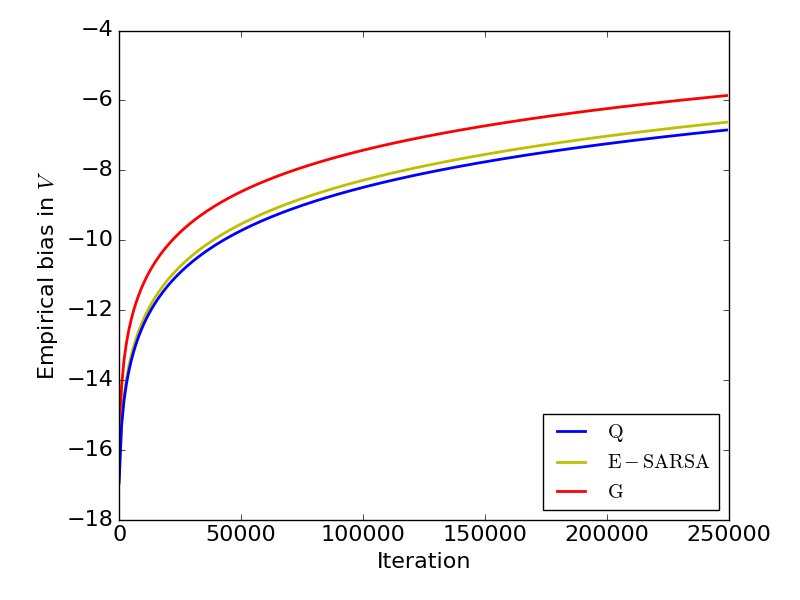}
      %  \caption{}
      %  \label{fig:cliff1}
    \end{subfigure}
    ~ %add desired spacing between images, e. g. ~, \quad, \qquad, \hfill etc. 
      %(or a blank line to force the subfigure onto a new line)
    \begin{subfigure}[b]{0.32\textwidth}
        \includegraphics[width=\textwidth]{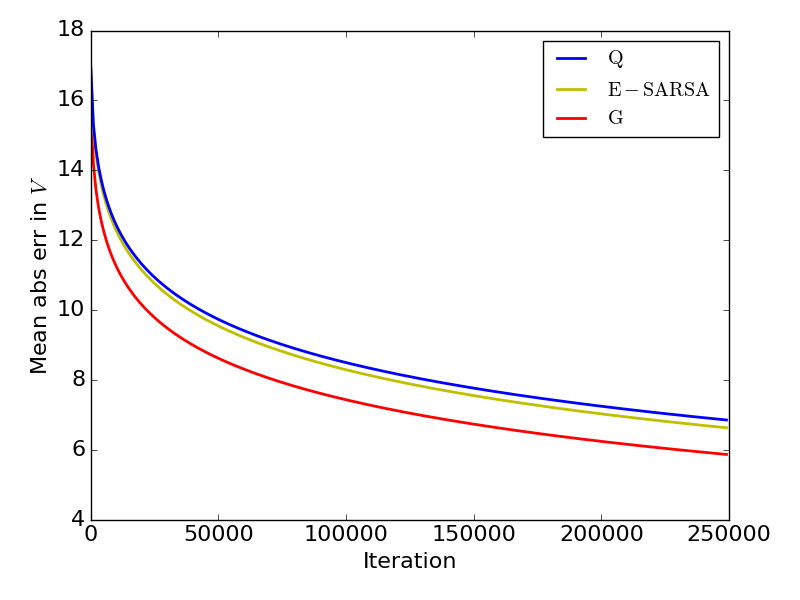}
      %  \caption{}
      %  \label{fig:cliff2}
    \end{subfigure}
    ~ %add desired spacing between images, e. g. ~, \quad, \qquad, \hfill etc. 
    %(or a blank line to force the subfigure onto a new line)
    \begin{subfigure}[b]{0.32\textwidth}
        \includegraphics[width=\textwidth]{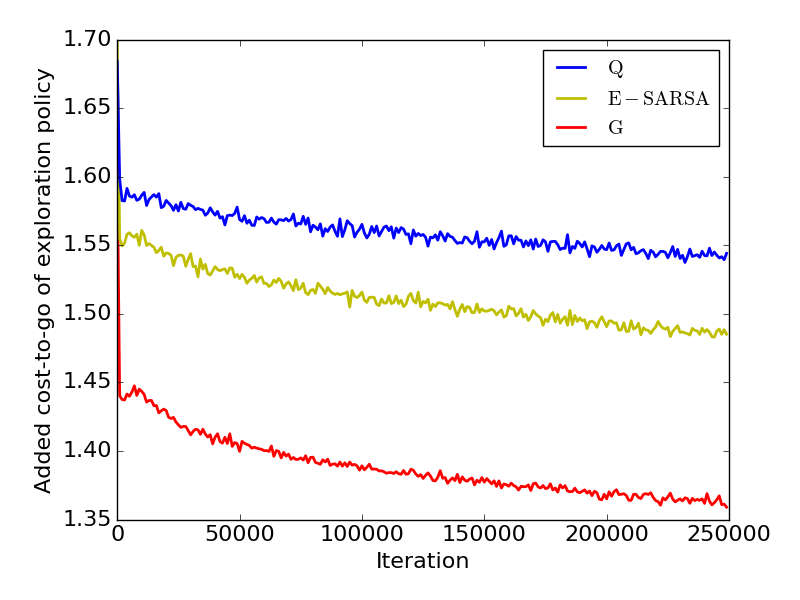}
       % \caption{}
       % \label{fig:cliff3}
    \end{subfigure}

     \caption{ %Performance of learning algorithms on various domains.
     {\bf Gridworld (Rows 1-3):}
     Comparison of Q-, G-, Q$^\rho$-, Double-Q-, $\Psi$- and  $\C T_C$-learning.
     {\bf Row~1:} The cost in each step is fixed at 1.
     {\bf Row~2:} The cost in each step is distributed as $\C N(1,2^2)$.
     {\bf Row~3:} In each run, the domain is generated by drawing each $\E[c|s,a]$ uniformly 
     over $[1,3]$. The cost in each step is distributed as $\C N(\E[c|s,a],4^2)$. Note that in the noisy domains (Rows 2 and 3),
     G-learning dominates over all the other algorithms by the three measures. 
     {\bf Cliff~(Row~4):}
     Comparison of Q- and G-learning,  and Expected-SARSA. The cost in each step is 1, and falling off the cliff costs 5.
	{\bf Left:}~Empirical bias of $V$, relative to $V^*$~\eqref{eq:bias}. %A confidence interval of 1 standard deviation is shown faded.
	{\bf Middle:}~Mean absolute error between $V$ and $V^*$~\eqref{eq:abserr}. %A confidence interval of 1 standard deviation is shown faded.
	{\bf Right:}~Value of greedy policy, with the baseline $V^*$ subtracted~\eqref{eq:value}; except in Row 4, which shows the value of the exploration policy.
}

    \label{fig:res}
    
\end{figure*}

\subsection{CLIFF WALKING}
\label{sec:cliff}
Cliff walking is a standard example in reinforcement learning~\cite{sutton1998reinforcement}, 
that demonstrates an advantage of on-policy algorithms such as SARSA~\cite{sutton1998reinforcement,szepesvari2010algorithms} and Expected-SARSA~\cite{van2009theoretical,john1994best} over off-policy learning approaches such as Q-learning. 
We use it to show another interesting strength of  G-learning.

In this example, the agent can walk on the grid in Figure~\ref{fig:cliff} horizontally or vertically, with deterministic transitions.
Each step costs 1, except when the agent walks off the cliff (the bottom row), which costs 5, or reaches the goal (lower 
right corner), which costs 0.
In either of these cases, the position resets to the lower left corner.

% \begin{figure}[!bt]
% \centering
% \includegraphics[width=0.3\textwidth]{rl_grid.png}
% \includegraphics[width=0.3\textwidth]{rl_mines.png}
% \includegraphics[width=0.3\textwidth]{rl_noise.png}
% \caption{ Gridworld domain. Mean absolute errors between the learned $V$ and the optimal $V^*$, for Q-learning, G-learning, Q$^{\rho}$-learning, $\Psi$-learning and Double-Q-learning, on different noisy settings, showing a significant advantage to G-learning. 
% The median across 100 runs is shown in bold, with the 25\% and 75\% quantiles faded.
% From top to bottom:
% (a) Uniform cost of 1 for all moves.
% (b) For an attempt to move into a square with(out) a diamond, the agent incurs a cost of 5(1).
% (c) Cost of $1 + {\cal N}(0,1)$ for all moves. 
% Note that the advantage of G-learning over Q-learning is more manifest in the latter, noisier cases.
% }
% \label{fig:grid_res}
% \end{figure}

\begin{figure}[!]

\centering
\includegraphics[width=0.3\textwidth]{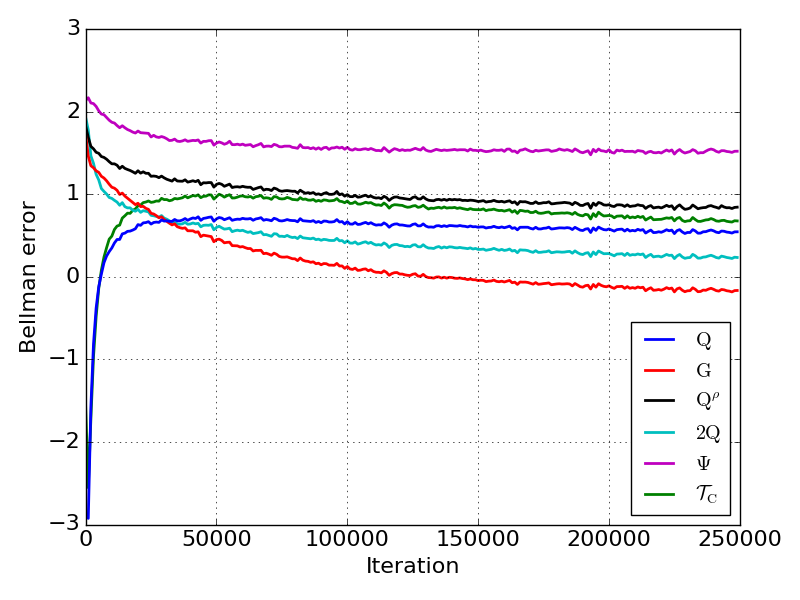}

\caption{ Running average of the Bellman error in the gridworld domain-generator example for 
Q-, G-, Q$^\rho$-, Double-Q-, $\Psi$- and  $\C T_C$-learning. The results for the other two gridworlds of Figure~\ref{fig:res} are similar.
}
\label{fig:bellman}
\end{figure}

Exploration is now on-line, with $s_t$ taken from the end of the previous step.
The exploration policy in our simulations is $\epsilon$-greedy with $\epsilon=0.1$,
i.e. with probability $\epsilon$ the agent chooses a random action, and otherwise it takes deterministically the one that seems optimal.
In practice, $\epsilon$ can be decreased after the learning phase, however it is also common to keep $\epsilon$ fixed for continued exploration~\cite{sutton1998reinforcement}.

In this setting, as shown in the bottom row of Figure~\ref{fig:res}, an off-policy algorithm like Q-learning performs poorly in terms of the value of its exploration policy, and the empirical cost it incurs.
It learns a rough estimate of $Q^*$ quickly, and then tends to use it and walk on the edge of the cliff.
This leads to the agent occasionally exploring the possibility of falling off the cliff.
In contrast, an on-policy algorithm like Expected-SARSA~\cite{van2009theoretical,john1994best} learns the value of its exploration policy, and quickly manages to avoid the cliff.

%\begin{figure}[!bt]
%
%\centering
%\includegraphics[width=0.3\textwidth]{cliff_err.png}
%\includegraphics[width=0.3\textwidth]{cliff_cost.png}
%
%\caption{ Cliff domain. {\bf Top}: G-learning converges to the optimal value faster than both Q-learning and Expected-SARSA. 
%{\bf Bottom}: At the same time, by avoiding the cliff, G-learning incurs lower costs during learning than Q-learning does, 
%although not as low as Expected-SARSA does. }
%\label{fig:cliff_res}
%\end{figure}

Figure~\ref{fig:cliff} compares Q-learning, G-learning and Expected-SARSA in this domain, and shows that G-learning learns to avoid the cliff even better than an on-policy algorithm, although for a different reason.
As an off-policy algorithm, G-learning does learn the value of the update policy, 
which prefers trajectories far from the cliff in the early stages of learning.
This occurs because near the cliff, avoiding the cost of falling requires ruling out downward moves, which has a high information cost.
On the other hand, trajectories far from the cliff, while paying a higher cost in  overall distance to the goal, enjoy lower information cost because acting randomly is not costly for them.

As shown in the bottom row of Figure~\ref{fig:res}, by using a greedy policy for $G$ as the basis of the $\epsilon$-greedy exploration, we enjoy the benefits of being aware of the value of the exploration policy during the learning stage. 
At the same time, G-learning converges faster than either Q-learning or Expected-SARSA to the correct value function.
In this case the ``noise'' that G-learning mitigates is related to the variability associated with the exploration.

\begin{figure}[!]
\centering
\includegraphics[width=0.3\textwidth]{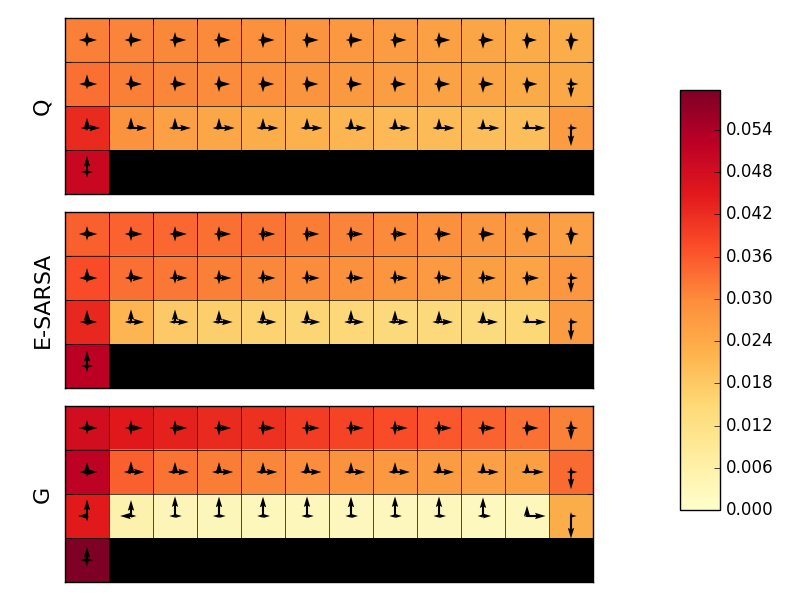}

\caption{ Cliff domain. The agent can choose a horizontally or vertically adjacent square, and moves there deterministically. The color scale and the arrow lengths indicate, respectively, the frequency of visiting each state and of making each transition, in the first 250,000 iterations of Q-learning, Expected-SARSA and G-learning. The near-greedy exploration policy of Q-learning has higher chance of taking the shortest path near the edge of the cliff at the bottom, than that of G-learning. As an off-policy algorithm, Q-learning fails to optimize for the exploration policy, whereas G-learning succeeds. }
\label{fig:cliff}
\end{figure}

\section{CONCLUSIONS}
\label{sec:conclusions}
The algorithm we have introduced successfully mitigates the slow learning problem of early stage Q-learning in noisy environments, 
that is caused by the bias generated by the hard optimization of the policy.

Although we have focused on Q-learning as a baseline, we believe that early-stage information penalties  
can  also be applied to advantage in more sophisticated model-free settings, such as TD($\lambda$),
and combined with other incremental learning techniques, such as function approximation, experience replay and actor-critic methods.

G-learning takes a Frequentist approach to estimating the optimal $Q$ function.
This is in contrast to Bayesian Q-learning~\cite{dearden1998bayesian}, which explicitly models the uncertainty about the $Q$ function as a posterior distribution.
It would be interesting to study the bias that hard optimization causes in the mean of this posterior, and to consider its reduction using methods similar to G-learning.

An important next step is to apply G-learning to more challenging domains, where an approximation of the $G$ function is necessary.
The simplicity of our linear $\beta$ schedule~\eqref{eq:sched} should facilitate such extensions, and allow G-learning to be combined with other schemes and algorithms.
Further study should also address the optimal schedule for $\beta$. We leave these important questions for future work.

\subsubsection*{Acknowledgments}
AP is supported by  ONR grant N00014-14-1-0243 and IARPA via DoI/IBC contract number D16PC00003.
RF and NT are supported by the DARPA MSEE Program, the Gatsby Charitable Foundation, the Israel Science Foundation and the Intel ICRI-CI Institute.

\small
\bibliographystyle{unsrt}

\bibliography{thebib}

\begin{thebibliography}{10}

\bibitem{watkins1992q}
Christopher~JCH Watkins and Peter Dayan.
\newblock Q-learning.
\newblock {\em Machine learning}, 8(3-4):279--292, 1992.

\bibitem{sutton1998reinforcement}
Richard~S Sutton and Andrew~G Barto.
\newblock {\em Reinforcement learning: An introduction}.
\newblock MIT press Cambridge, 1998.

\bibitem{powell2007approximate}
Warren~B Powell.
\newblock {\em Approximate Dynamic Programming: Solving the curses of
  dimensionality}, volume 703.
\newblock John Wiley \& Sons, 2007.

\bibitem{szepesvari2010algorithms}
Csaba Szepesv{\'a}ri.
\newblock Algorithms for reinforcement learning.
\newblock {\em Synthesis Lectures on Artificial Intelligence and Machine
  Learning}, 4(1):1--103, 2010.

\bibitem{mnih2015human}
Volodymyr Mnih~et al.
\newblock Human-level control through deep reinforcement learning.
\newblock {\em Nature}, 518(7540):529--533, 2015.

\bibitem{capen1971competitive}
Edward~C Capen, Robert~V Clapp, William~M Campbell, et~al.
\newblock Competitive bidding in high-risk situations.
\newblock {\em Journal of petroleum technology}, 23(06):641--653, 1971.

\bibitem{thaler1988anomalies}
Richard~H Thaler.
\newblock Anomalies: The winner's curse.
\newblock {\em The Journal of Economic Perspectives}, pages 191--202, 1988.

\bibitem{van2004rational}
Eric Van~den Steen.
\newblock Rational overoptimism (and other biases).
\newblock {\em American Economic Review}, pages 1141--1151, 2004.

\bibitem{smith2006optimizer}
James~E Smith and Robert~L Winkler.
\newblock The optimizer's curse: Skepticism and postdecision surprise in
  decision analysis.
\newblock {\em Management Science}, 52(3):311--322, 2006.

\bibitem{rubin2012trading}
Jonathan Rubin, Ohad Shamir, and Naftali Tishby.
\newblock {Trading value and information in MDPs}.
\newblock In {\em Decision Making with Imperfect Decision Makers}, pages
  57--74. Springer, 2012.

\bibitem{hasselt2010double}
Hado~V Hasselt.
\newblock {Double Q-learning}.
\newblock In {\em NIPS}, 2010.

\bibitem{ghavamzadeh2011speedy}
Mohammad Ghavamzadeh, Hilbert~J Kappen, Mohammad~G Azar, and R{\'e}mi Munos.
\newblock Speedy {Q}-learning.
\newblock In {\em NIPS}, pages 2411--2419, 2011.

\bibitem{lee2012intelligent}
Donghun Lee and Warren~B Powell.
\newblock {An intelligent battery controller using bias-corrected Q-learning.}
\newblock In {\em AAAI}, 2012.

\bibitem{deep2q}
Hado Van~Hasselt, Arthur Guez, and David Silver.
\newblock {Deep reinforcement learning with Double Q-learning }.
\newblock In {\em AAAI}, 2016.

\bibitem{van2009theoretical}
Harm Van~Seijen, Hado Van~Hasselt, Shimon Whiteson, and Marco Wiering.
\newblock A theoretical and empirical analysis of {E}xpected {S}arsa.
\newblock In {\em ADPRL}, pages 177--184. IEEE, 2009.

\bibitem{john1994best}
George~H John.
\newblock When the best move isn't optimal: Q-learning with exploration.
\newblock In {\em AAAI}, page 1464, 1994.

\bibitem{bertsekas1995dynamic}
Dimitri~P Bertsekas.
\newblock {\em Dynamic programming and optimal control}, volume 1,2.
\newblock Athena Scientific Belmont, MA, 1995.

\bibitem{thrun1993issues}
Sebastian Thrun and Anton Schwartz.
\newblock Issues in using function approximation for reinforcement learning.
\newblock In {\em Proceedings of the Fourth Connectionist Models Summer
  School}. Lawrence Erlbaum Publisher, Hillsdale, NJ, 1993.

\bibitem{brafman2003r}
Ronen~I Brafman and Moshe Tennenholtz.
\newblock R-max-a general polynomial time algorithm for near-optimal
  reinforcement learning.
\newblock {\em JMLR}, 3:213--231, 2003.

\bibitem{jaynes2003probability}
Edwin~T Jaynes.
\newblock {\em Probability theory: The logic of science}.
\newblock Cambridge University Press, 2003.

\bibitem{busoniu2010reinforcement}
Lucian Busoniu, Robert Babuska, Bart De~Schutter, and Damien Ernst.
\newblock {\em Reinforcement learning and dynamic programming using function
  approximators}, volume~39.
\newblock CRC press, 2010.

\bibitem{borkar2008stochastic}
Vivek~S Borkar.
\newblock Stochastic approximation.
\newblock {\em Cambridge Books}, 2008.

\bibitem{baird1994reinforcement}
Leemon~C Baird~III.
\newblock Reinforcement learning in continuous time: Advantage updating.
\newblock In {\em IEEE International Conference on Neural Networks}, volume~4,
  pages 2448--2453. IEEE, 1994.

\bibitem{baird1995advantage}
Mance~E Harmon, Leemon~C Baird~III, and A~Harry Klopf.
\newblock Advantage updating applied to a differential game.
\newblock {\em NIPS 7}, 7:353, 1995.

\bibitem{bellemare2016increasing}
Marc~G Bellemare, Georg Ostrovski, Arthur Guez, Philip~S Thomas, and R{\'e}mi
  Munos.
\newblock Increasing the action gap: New operators for reinforcement learning.
\newblock In {\em AAAI}, 2016.

\bibitem{pendrith1994reinforcement}
Mark~D Pendrith and C~Sammut.
\newblock On reinforcement learning of control actions in noisy and
  non-{M}arkovian domains.
\newblock Technical report, 1994.

\bibitem{pendrith1997estimator}
Mark~D Pendrith and Malcolm~RK Ryan.
\newblock Estimator variance in reinforcement learning: Theoretical problems
  and practical solutions.
\newblock In {\em AAAI Workshop on On-Line Search}, 1997.

\bibitem{moreno2006noisy}
{\'A}lvaro Moreno, Jos{\'e}~D Mart{\'\i}n, Emilio Soria, Rafael Magdalena, and
  Marcelino Mart{\'\i}nez.
\newblock Noisy reinforcements in reinforcement learning: some case studies
  based on gridworlds.
\newblock In {\em Proceedings of the 6th WSEAS international conference on
  applied computer science}, pages 296--300, 2006.

\bibitem{todorov2006linearly}
Emanuel Todorov.
\newblock Linearly-solvable {M}arkov decision problems.
\newblock In {\em NIPS}, pages 1369--1376, 2006.

\bibitem{kappen2012optimal}
Hilbert~J Kappen, Vicen{\c{c}} G{\'o}mez, and Manfred Opper.
\newblock Optimal control as a graphical model inference problem.
\newblock {\em Machine learning}, 87(2):159--182, 2012.

\bibitem{todorov2009efficient}
Emanuel Todorov.
\newblock Efficient computation of optimal actions.
\newblock {\em Proceedings of the national academy of sciences},
  106(28):11478--11483, 2009.

\bibitem{peters2010relative}
Jan Peters, Katharina M{\"u}lling, and Yasemin Altun.
\newblock Relative entropy policy search.
\newblock In {\em AAAI}, 2010.

\bibitem{rawlik2010approximate}
Konrad Rawlik, Marc Toussaint, and Sethu Vijayakumar.
\newblock Approximate inference and stochastic optimal control.
\newblock {\em arXiv preprint arXiv:1009.3958}, 2010.

\bibitem{azar2012dynamic}
Mohammad~Gheshlaghi Azar, Vicen{\c{c}} G{\'o}mez, and Hilbert~J Kappen.
\newblock Dynamic policy programming.
\newblock {\em JMLR}, 13(1):3207--3245, 2012.

\bibitem{still2012information}
Susanne Still and Doina Precup.
\newblock An information-theoretic approach to curiosity-driven reinforcement
  learning.
\newblock {\em Theory in Biosciences}, 131(3):139--148, 2012.

\bibitem{tokic2011value}
Michel Tokic and G{\"u}nther Palm.
\newblock Value-difference based exploration: adaptive control between
  epsilon-greedy and softmax.
\newblock In {\em KI 2011: Advances in Artificial Intelligence}, pages
  335--346. Springer, 2011.

\bibitem{even2004learning}
Eyal Even-Dar and Yishay Mansour.
\newblock Learning rates for {Q}-learning.
\newblock {\em JMLR}, 5:1--25, 2004.

\bibitem{dearden1998bayesian}
Richard Dearden, Nir Friedman, and Stuart Russell.
\newblock Bayesian {Q}-learning.
\newblock In {\em AAAI/IAAI}, pages 761--768, 1998.

\end{thebibliography}

\appendix

\section{CONVERGENCE OF G-LEARNING}\label{sec:apx}

In this section we prove the convergence of  $G$  to the optimal $G^*,$
with probability 1, under the G-learning update rule
\eq{
\MoveEqLeft G(s_t,a_t) \gets  (1-\alpha_t)G(s_t,a_t) 
\label{glearn2}
\\
\nn
& + \alpha_t \left( c_t - \tfrac{\gamma}{\beta} \log \left( \sum_{a'} \rho(a'|s_{t+1}) e^{-\beta G (s_{t+1},a')}  \right) \right).
} 
Recall that the supremum norm is defined as \mbox{$| x |_{\infty} = \max_{i}|x_i|$}, and that the optimal $G$ function
satisfies
\eq{
G^*(s,a) ={}&\E_\theta[c|s,a]
 \\
 \nn
 &-\tfrac\gamma\beta\E_p\left[\log\sum_{a'}\rho(a'|s')e^{-\beta G^*(s',a')}\right]
\\
\equiv{}& \Bb^*[G^*]_{(s,a)}.
\label{eq:hq2}
}
\setcounter{thm}{0}
The convergence proof relies on the following Lemma.
\begin{lem}
The operator $\Bb^*[G]_{(s,a)}$  defined in~\eqref{eq:hq2} is a contraction in the supremum norm.
\end{lem}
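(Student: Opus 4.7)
The plan is to reduce the contraction claim to a Lipschitz bound on the log-sum-exp (soft-min) operator inside the definition of $\mathbf{B}^*$. Let $\Delta = \bigl| G_1 - G_2 \bigr|_\infty$, so that for every state-action pair we have $G_2(s',a') - \Delta \le G_1(s',a') \le G_2(s',a') + \Delta$. The $\E_\theta[c|s,a]$ term cancels when we subtract $\mathbf{B}^*[G_1]_{(s,a)}$ from $\mathbf{B}^*[G_2]_{(s,a)}$, so the whole argument reduces to controlling the difference of the two log-sum-exp expressions.

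The key step is a pointwise bound: using monotonicity of the exponential and of the logarithm, the two-sided inequality on $G_1$ implies
\eq{
\nn
e^{-\beta\Delta} \sum_{a'}\rho(a'|s') e^{-\beta G_2(s',a')} \le \sum_{a'}\rho(a'|s')e^{-\beta G_1(s',a')} \le e^{\beta\Delta} \sum_{a'}\rho(a'|s')e^{-\beta G_2(s',a')},
}
and taking logarithms yields $\bigl|\log\sum_{a'}\rho(a'|s')e^{-\beta G_1(s',a')} - \log\sum_{a'}\rho(a'|s')e^{-\beta G_2(s',a')}\bigr| \le \beta\Delta$ for every $s'$. This is the soft-min analogue of the elementary fact $|\min_{a'} x_{a'} - \min_{a'} y_{a'}| \le \|x-y\|_\infty$, and it is what makes the argument go through exactly as in the standard Q-learning contraction proof.

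Multiplying the pointwise bound by $\gamma/\beta$ (the $\beta$'s cancel) and taking expectation with respect to $p(s'|s,a)$ preserves the inequality, giving $|\mathbf{B}^*[G_1]_{(s,a)} - \mathbf{B}^*[G_2]_{(s,a)}| \le \gamma \Delta$ uniformly in $(s,a)$. Taking the supremum over $(s,a)$ yields the desired contraction $\bigl|\mathbf{B}^*[G_1] - \mathbf{B}^*[G_2]\bigr|_\infty \le \gamma \bigl|G_1 - G_2\bigr|_\infty$.

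I do not anticipate any real obstacle here; the only mildly delicate point is recognizing that the factor of $\beta$ introduced by the logarithm bound is exactly cancelled by the $1/\beta$ prefactor in the definition of $\mathbf{B}^*$, so the contraction constant is $\gamma$ independently of $\beta$. This is consistent with the limiting cases: at $\beta\to\infty$ the soft-min collapses to the ordinary min and we recover the usual Bellman contraction, while at $\beta\to 0$ the operator tends to the Q$^\rho$-Bellman operator, which is also a $\gamma$-contraction.
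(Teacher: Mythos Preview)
Your proof is correct and complete. It differs from the paper's argument in a genuinely interesting way. The paper proceeds by introducing, for each policy $\pi$, the affine operator $\Bb^\pi[G]_{(s,a)}=k^\pi(s,a)+\gamma\sum_{s',a'}p(s'|s,a)\pi(a'|s')G(s',a')$, observes that each $\Bb^\pi$ is a $\gamma$-contraction by the standard argument, then uses the variational identity $\Bb^*[G]_{(s,a)}=\min_\pi\Bb^\pi[G]_{(s,a)}$ (attained at the soft-greedy policy) together with the usual ``pick the minimizing policy on each side'' trick to transfer the contraction to $\Bb^*$. Your route is more direct: you bound the log-sum-exp map itself, showing it is $1$-Lipschitz in the sup norm (after the $\beta$ cancellation), without ever introducing auxiliary policies. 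What the paper's approach buys is a structural insight---it exhibits $\Bb^*$ as a pointwise infimum of linear Bellman operators, which is the mechanism underlying the whole free-energy construction and makes the analogy with the hard-min Q-learning proof explicit. What your approach buys is economy: it is self-contained, avoids citing the contraction of $\Bb^\pi$, and makes transparent that the contraction modulus is $\gamma$ uniformly in $\beta$, with the $\beta\to\infty$ and $\beta\to0$ limits falling out immediately as you note.
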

 \begin{proof}

Let us define 
\eq{ 
\label{bpg2}
\Bb^{\pi}[G]_{(s,a)}
={}&
 k^\pi(s,a) 
 \\
 \nn
& +  \gamma\sum_{s', a'} p(s'|s,a)  \pi(a'|s')  G(s',a') ,
}

where
\eq{ 
k^\pi(s,a) ={}&  \E_\theta[c|s,a] 
\\
& +  \tfrac{\gamma}{\beta} \sum_{s', a'} p(s'|s,a)  \pi(a'|s')  \log \tfrac{\pi(a'|s') }{\rho(a'|s')}.
\nn
}
Now, for any policy $\pi$, the operator~\eqref{bpg2} is a contraction  under the supremum norm~\cite{bertsekas1995dynamic}, i.e. for any $G_1$ and $G_2$
\eq{ 
| \Bb^{\pi}[G_1] - \Bb^{\pi}[G_2]  |_{\infty} \leq  \gamma  | G_1 - G_2  |_{\infty}.
}
Also note that
\eq{
\Bb^*[G_i]_{(s,a)} = \min_{\pi}  \Bb^{\pi}[G_i]_{(s,a)},
}
and that the optimum is achieved for 
\eq{
\pi_{G_i}(a|s) = \frac{ \rho(a|s)  e^{-\beta G_i(s,a)} }  {\sum_{a'} \rho(a'|s) e^{-\beta G_i(s,a')}   }    .
%\label{pp}
}
The Lemma now follows from 
\eq{
\MoveEqLeft  \big| \Bb^*[G_1] - \Bb^*[G_2]  \big|_{\infty}
  \\
\nn
& = \max_{(s,a)}  \left| \Bb^*[G_1]_{(s,a)} - \Bb^*[G_2]_{(s,a)}  \right|
\nn
\\
\nn
&=   \max_{(s,a)}  \left| \Bb^{\pi_{G_1}}[G_1]_{(s,a)} - \Bb^{\pi_{G_2}}[G_2]_{(s,a)}  \right|
\\
\intertext{(choose $i=\argmin\Bb^{\pi_{G_i}}[G_i]_{(s,a)}$)}
\nn
& \leq  \max_{(s,a)} \max_{i=1,2} \left| \Bb^{\pi_{G_i}}[G_1]_{(s,a)} - \Bb^{\pi_{G_i}}[G_2]_{(s,a)}  \right|
\\
\nn
& =   \max_{i=1,2}  \big| \Bb^{\pi_{G_i}}[G_1] - \Bb^{\pi_{G_i}}[G_2]  \big|_{\infty}
\\
\nn
&\leq  \gamma \big| G_1 - G_2 \big|_{\infty}.\qedhere
}
 \end{proof}

 The update equation~\eqref{glearn2} of the algorithm can be written as a stochastic iteration equation
 \eq{
 G_{t+1}(s_{t},a_{t}) ={}& (1-\alpha_t)G_t(s_t,a_t)
 \label{iter2}
 \\
 &+ \alpha_t ( \Bb^*[G_{t}]_{(s_t, a_t)} + z_t(c_t, s_{t+1}) )
 \nn
 }
 where the random variable $z_t$ is
 \eq{
 z_t(c_t, s_{t+1}) \equiv{}& -\Bb^*[G_{t}]_{(s_t, a_t)}
 \\
 &  + c_t - \tfrac{\gamma}{\beta} \log  \sum_{a'} \rho(a'|s_{t+1}) e^{-\beta G_t (s_{t+1},a') } .
 \nn
 }
Note that $z_t$ has expectation 0.  Many results exist for iterative equations of the type~\eqref{iter2}.
In particular, given conditions
\eq{
\sum_t\alpha_t=\infty;&&\sum_t\alpha_t^2<\infty,
\label{alphas2}
}
the contractive nature of $\Bb^*$,
infinite  visits to each pair $(s_t,a_t)$ and assuming that $|z_t|< \infty$ , 
$G_t$ is guaranteed to converge to the optimal $G^*$ with probability 1~\cite{bertsekas1995dynamic, borkar2008stochastic}. 

\end{document}